\newif\ifisarxiv
\isarxivtrue

\ifisarxiv
    \newcommand{\confver}[1]{}
    \newcommand{\arxivver}[1]{#1}
\else
    \newcommand{\confver}[1]{#1}
    \newcommand{\arxivver}[1]{}
\fi

% includes
\ifisarxiv

\documentclass[10pt]{article}
\usepackage[margin=1in]{geometry}
\usepackage{cite}
\newcommand{\citep}[1]{\cite{#1}}

\author{Steven Jecmen \\
Carnegie Mellon University \\
\texttt{sjecmen@cs.cmu.edu} \\
\and
Nihar B. Shah \\
Carnegie Mellon University \\
\texttt{nihars@cs.cmu.edu} \\
\and 
Fei Fang \\
Carnegie Mellon University \\
\texttt{feifang@cmu.edu} \\
\and
Vincent Conitzer \\
Duke University \\
\texttt{conitzer@cs.duke.edu} \\
}
\date{}
\date{}

\else
% MLES at ICLR includes
\documentclass{article} 
\usepackage{iclr2022_conference,times}

\author{Steven Jecmen \\
Carnegie Mellon University \\
\texttt{sjecmen@cs.cmu.edu} \\
\And
Nihar B. Shah \\
Carnegie Mellon University \\
\texttt{nihars@cs.cmu.edu} \\
\And 
Fei Fang \\
Carnegie Mellon University \\
\texttt{feifang@cmu.edu} \\
\And
Vincent Conitzer \\
Duke University \\
\texttt{conitzer@cs.duke.edu} \\
}

\iclrfinalcopy 

\fi

\usepackage{hyperref}
\usepackage{xurl}
\usepackage{enumitem}
\usepackage{booktabs}
\usepackage{comment}
\usepackage{subcaption,graphicx}
\usepackage{amsfonts,amsmath,amssymb,amsthm}
\usepackage{xspace}
\usepackage{tabularx}

\usepackage{ifthen}

\title{Tradeoffs in Preventing Manipulation \\ in Paper Bidding for Reviewer Assignment}

\newtheorem{theorem}{Theorem}

\newcommand{\algo}[1]{\textsc{#1}}
\newcommand{\bidlimit}{\algo{Bid Limit}\xspace}
\newcommand{\randomdisplay}{\algo{Random Display}\xspace}
\newcommand{\cycleprevention}{\algo{Cycle Prevention}\xspace}
\newcommand{\geographicdiversity}{\algo{Geographic Diversity}\xspace}
\newcommand{\bidmodeling}{\algo{Bid Modeling}\xspace}
\newcommand{\reviewerclustering}{\algo{Reviewer Clustering}\xspace}
\newcommand{\probabilitylimit}{\algo{Probability-Limited Randomized Assignment}\xspace}

\begin{document}

\confver{\lhead{ML Evaluation Standards Workshop at ICLR 2022}} % supposed to be in maketitle
\maketitle

\begin{abstract}
Many conferences rely on paper bidding as a key component of their reviewer assignment procedure. 
These bids are then taken into account when assigning reviewers to help ensure that each reviewer is assigned to suitable papers. However, despite the benefits of using bids, reliance on paper bidding can allow malicious reviewers to manipulate the paper assignment for unethical purposes (e.g., getting assigned to a friend's paper). Several different approaches to preventing this manipulation have been proposed and deployed. In this paper, we enumerate certain desirable properties that algorithms for addressing bid manipulation should satisfy. We then offer a high-level analysis of various approaches along with directions for future investigation. 
\end{abstract} 

\section{Introduction} \label{sec:intro}
In peer review in computer science, paper submissions must be assigned reviewers with the expertise required to provide a high-quality review. The standard approach to this problem involves computing a similarity score for each reviewer-paper pair representing the estimated quality of review by that reviewer for that paper, incorporating both the reviewer's expertise and preferences. These similarities are computed from various components~\citep{shah2021survey}, including text-matching with the reviewer's past work~\citep{mimno07topicbased,liu14graphpropagation,rodriguez08coauthorsip,tran17expertsuggestion,charlin13tpms}, the paper and reviewer subject areas, and reviewer-provided ``bids.'' Typically, a reviewer assignment is then found that maximizes total similarity~\citep{charlin13tpms,long2013good,goldsmith07aiconf,tang10constraied,flach2010kdd,taylor08assignment}.

One major part of the similarity computation is the paper bidding process. During paper bidding, each reviewer has the option of indicating how interested they are in reviewing each of the submitted papers by choosing a ``bid'' from a list of options (e.g., ``Not willing'', ``In a pinch'', ``Willing'', ``Eager''). Reviewers make these decisions based on the paper title, subject areas, and abstract. Paper bidding is near-universally used in practice, and tends to have a major impact on the resulting reviewer assignment. At AAAI 2021~\citep{leytonbrown2022matching}: ``{\it Reviewers were assigned papers for which they bid positively (willing or eager) 77.4\% of the time. A back-of-the-envelope calculation leads us to estimate that 79.3\% of these matches may not have happened had the reviewer not bid positively.}''

However, this reliance on paper bidding opens the door for malicious reviewers to take advantage of the paper assignment process. These malicious reviewers manipulate the assignment by providing dishonest bids in order to get assigned to a target paper. This target paper may be a friend's work which the malicious reviewer wishes to provide a positive review for, or a rival's work which the malicious reviewer wants to ``torpedo''~\citep{barroga2014safeguarding,paolucci2014mechanism,akst2010hate}. Rings of colluding reviewers have been recently uncovered at a few computer science conferences, including this instance in an ACM conference~\citep{Vijaykumar_2020,littman2021collusion}:
``{\it Another SIG community has had a collusion problem where the investigators found that a group of PC members and authors colluded to bid and push for each other’s papers violating the usual conflict-of-interest rules.}'' 
Beyond bidding, malicious reviewers can also potentially modify their subject areas or their record of past work in order to achieve a desired paper assignment. However, we focus primarily on bid manipulation in this work as the easiest and most obvious avenue through which the paper assignment can be manipulated.

Possible manipulation of the paper assignment is taken seriously by major conferences (e.g., AAAI 2021~\citep{leytonbrown2022matching} and AAAI 2022~\citep{shah2021survey}), which have used a variety of approaches to prevent this sort of malicious behavior in recent years. Several techniques are described in recent research papers~\citep{jecmen2020manipulation,wu2021making,leytonbrown2022matching,shah2021survey}, while another recent work~\citep{jecmen2022dataset} provides a dataset of malicious bids for use in future research on this issue. In this paper, we take a high-level look at several of these approaches and consider: to what extent do they satisfy properties that we would want paper assignment algorithms to satisfy? We enumerate a list of desiderata for assignment algorithms and present a preliminary evaluation of the strengths and weaknesses of various proposed approaches on these desiderata.

\section{Desiderata} \label{sec:des}
The simplest approach to handling the problem of bid manipulation is simply to not use paper bidding at all, relying solely on text-matching scores (\textit{text similarities}) and subject areas for the assignment. However, bids are near-universally used in practice and some venues even assign reviewers based only on bids. This is because there are several significant benefits to considering bids when assigning papers. 
\begin{itemize} 
    \item Bids can capture aspects of a reviewer's preferences or expertise not captured by text similarities, either because the text modeling failed to accurately represent the relationship between the submission and the reviewer's past work or because relevant factors were not represented in the reviewer's past work.
    \item Bidding allows reviewers to correct erroneous text similarities by expressing interest in papers that are truly a good match but with which the reviewer has a low text similarity.
    \item Reviewers may be more likely to provide high-quality reviews for papers that they explicitly expressed interest in reviewing during bidding. This is supported by \cite{cabanac2013capitalizing}, \confver{who}\arxivver{which} found that reviewers reported higher confidence in their reviews for papers that they bid on. 
\end{itemize}
Thus, the assignment algorithms we consider here attempt to carefully use bids in order to achieve the above benefits while remaining robust against manipulation from malicious reviewers. 

Based on these objectives, we present several desirable  and potentially conflicting properties that an ideal assignment algorithm should satisfy.
\begin{enumerate}[label=(\Alph*)]
    \item \label{des:quality} \textbf{Assignment quality:} The algorithm should produce assignments with a high level of expertise, as represented by text similarities, subject areas, and bids.
    \item \label{des:expressive} \textbf{Preference expressiveness:} The algorithm should allow reviewers to express their true preferences in a flexible manner. In particular, this means that it should produce good assignments for reviewers with idiosyncratic preferences not captured by text similarities and for reviewers with erroneous text similarities.
    \item \label{des:incentive} \textbf{Incentives to bid:} The algorithm should incentivize reviewers to provide accurate bids by assigning reviewers to papers that match their own bids to some extent.
    \item \label{des:low_prob} \textbf{Low attack success rate:} The algorithm should not allow a malicious reviewer to significantly increase their probability of assignment with a specific target paper through manipulating their bids. We call this assignment probability the ``probability of successful manipulation'' and call this manipulation of bids an ``attack.''  
    \item \label{des:attack_cost} \textbf{High attack cost:} A malicious reviewer should require extra information (e.g., other reviewers' bids/text similarities) or resources (e.g., additional colluding reviewers) in order to effectively manipulate the paper assignment.
    \item \label{des:adjust} \textbf{Adjustability:} Conference program chairs should be able to easily adjust the algorithm in order to achieve a desired tradeoff between the other desiderata.
    \item \label{des:comp} \textbf{Computational scalability:} The algorithm should be feasible to run at the large scale of modern conferences (with thousands of reviewers and papers), in terms of computational resources such as runtime and memory.
\end{enumerate}
These objectives are often contradictory and cannot all be satisfied simultaneously. We instead hope for assignment algorithms that can effectively achieve a balance between them.

\section{Algorithms}

\begin{table}[t!]
\centering
\begin{tabularx}{\textwidth}{Xll} \toprule
Algorithm           & Strengths     & Weaknesses     \\ \midrule
\bidlimit          & \ref{des:quality}, \ref{des:expressive}, \ref{des:incentive}, \ref{des:comp}       & \ref{des:low_prob}, \ref{des:attack_cost} \\
\randomdisplay      & \ref{des:expressive}, \ref{des:incentive}, \ref{des:adjust}, \ref{des:comp}           & \ref{des:quality}, \ref{des:attack_cost} \\
\cycleprevention~\citep{guo2018k}      & \ref{des:expressive}, \ref{des:incentive}      & \ref{des:low_prob}, \ref{des:adjust}, \ref{des:comp} \\ 
\geographicdiversity  & \ref{des:quality}, \ref{des:expressive}, \ref{des:incentive}, \ref{des:attack_cost}     & \ref{des:low_prob}, \ref{des:adjust} \\ 
\bidmodeling~\citep{wu2021making}           & \ref{des:quality}, \ref{des:low_prob}, \ref{des:attack_cost}          & \ref{des:expressive},  \ref{des:incentive},\ref{des:adjust} \\ 
\reviewerclustering     & \ref{des:low_prob}, \ref{des:adjust}       & \ref{des:expressive}, \ref{des:incentive},\ref{des:attack_cost}, \ref{des:comp}     \\ 
\textsc{Probability-Limited \newline Randomized Assignment}~\citep{jecmen2020manipulation}      & \ref{des:quality}, \ref{des:expressive}, \ref{des:incentive}, \ref{des:adjust}, \ref{des:comp}   & \ref{des:attack_cost}   \\
\bottomrule
\end{tabularx}
\caption{Key strengths and weaknesses of algorithms.}
\label{table:des}
\end{table}

Several different approaches have been proposed for paper assignment in the presence of malicious behavior, both in practice and in the literature. 
Although these approaches take a wide variety of forms, we view each of them as an end-to-end algorithm for the paper assignment process, encompassing the solicitation of bids and other features from reviewers and ending by outputting the final paper assignment. 
In this section, we present a brief description of some of these algorithms, along with what we see as their strengths and weaknesses on the various desiderata from Section~\ref{sec:des}. These strengths and weaknesses are summarized in Table~\ref{table:des}.

\subsection{Algorithm: \bidlimit} 

\paragraph{Description:} This simple approach requires each reviewer to enter at least some number of positive bids, and may also limit the number of negative bids that can be placed. If a reviewer does not meet these bidding criteria, the assignment algorithm may down-weight their bids or ignore them entirely when computing similarities. Intuitively, if a reviewer must bid positively on several papers (and these bids are weighted heavily when computing similarities), a malicious reviewer will have high similarity with some papers other than their target paper and may be assigned to those papers instead of their target. This idea has been used at numerous conferences, including AAAI 2021 and 2022.

\paragraph{Evaluation:} On the strong side, this approach is minimally disruptive to the standard assignment process, since honest reviewers need only make additional positive bids or remove negative bids in order to meet the requirements. Thus, the approach maintains the benefits of using bids in the standard way: it finds a high-quality assignment \ref{des:quality}, and works well for reviewers with inaccurate text similarities as they can bid positively on any papers they think are truly the best fit \ref{des:expressive}. This approach has benefits even in the absence of malicious behavior as it encourages honest reviewers to provide information \ref{des:incentive}. It also makes it more likely that each paper gets several positive bids, as \cite{shah2018design} \confver{observe}\arxivver{observes} that the standard bidding process leaves many papers with very few positive bids. The algorithm requires negligible additional computation \ref{des:comp}.

As for weaknesses, this approach is not robust against malicious behavior if malicious reviewers are behaving strategically \ref{des:low_prob}, since they can choose to bid positively only on papers with which they have very low text similarity and thus are unlikely to be assigned to. Furthermore, this attack is simple to execute \ref{des:attack_cost}. While the parameter denoting the number of required bids is easily adjustable, the connection between this parameter and the algorithm's performance on other desiderata (e.g., the probability of successful manipulation) is unclear \ref{des:adjust}.

\subsection{Algorithm: \randomdisplay} \label{sec:display}

\paragraph{Description:} Under this algorithm, each reviewer is shown a randomly-chosen subset of papers during the bidding process and can only bid on these papers. A similar procedure was used for bidding at AAAI 2020, where only a limited number of papers were shown to each reviewer. Since a malicious reviewer only has a limited probability of being able to bid on their target paper, this can lower the likelihood that they succeed at getting assigned.  If desired, a conference can provide a hard limit on the probability of successful manipulation by disallowing the assignment of any reviewer to a paper not shown to them for bidding; we refer to this as the hard-constraint variant of \randomdisplay. In other words, if half of the papers are displayed to each reviewer under the hard-constraint variant, the probability of successful manipulation would be limited at $0.5$ since the target paper is not be displayed to the malicious reviewer half of the time.

\paragraph{Evaluation:} One strength is that the subset of papers shown to each reviewer should be representative of the conference as a whole, so an honest reviewer should not have difficulty finding good matches to bid on \ref{des:expressive}. An honest reviewer also has a strong incentive to bid since bids are used in the same way as under the standard assignment algorithm \ref{des:incentive}. Under the hard-constraint variant, the program chairs can easily achieve a desired maximum probability of successful manipulation by appropriately choosing the proportion of displayed papers \ref{des:adjust}. 
The algorithm requires negligible additional computation \ref{des:comp}.

On the weak side, the optimal strategy for a malicious reviewer is simple \ref{des:attack_cost}: bid positively on the target paper if it is displayed and bid negatively on all others. Further, one can show that the hard-constraint variant of \randomdisplay is dominated by  \probabilitylimit (another algorithm described later in Section~\ref{sec:problimit}), in terms of expected similarity \ref{des:quality} when they control the probability of successful manipulation at the same level. See Appendix~\ref{apdx:random-vs-q} for the formal result. Note that the \randomdisplay and the \probabilitylimit algorithms are directly comparable because they both use the same similarity objective and provide a guarantee on the probability of successful manipulation. 

Overall, the algorithm's ability to effectively limit the probability of successful manipulation \ref{des:low_prob} is unclear. Regardless of whether the hard-constraint variant is used, sufficiently limiting the probability of successful manipulation may require imposing impractical restrictions on the bidding options for honest reviewers. Furthermore, if the hard-constraint variant is not used, then a malicious reviewer may still be able to succeed even if their target paper is not displayed for bidding. By bidding negatively on all displayed papers, they may be able to lower their similarity with enough papers so that their target paper is one of the highest-similarity papers remaining (even though it was not displayed). This issue can be solved by using the hard-constraint variant, but this comes at the cost of severely restricting the assignments for honest reviewers.

\subsection{Algorithm: \cycleprevention} 

\paragraph{Description:} In some cases, malicious reviewers who have authored a paper may collude with other reviewers who have also authored a paper at the same conference. These reviewers will attempt to get assigned to each others' papers through bidding as part of a deal to benefit each other. This algorithm~\citep{guo2018k,boehmer2021combating} attempts to prevent this collusion by restricting the assignment so that it cannot contain any $2$-cycles of reviewers: that is, if Alice is assigned to review Bob's paper, then Bob cannot be assigned to review Alice's paper. $3$-cycles and larger may also be restricted if computational resources allow. This approach has been taken by AAAI 2021~\citep{leytonbrown2022matching}.

\paragraph{Evaluation:} We first consider strengths. Note that unlike most of the other algorithms we discuss, this algorithm assumes that the malicious reviewers are part of a colluding group. As mentioned in Section~\ref{sec:intro}, there is reason to believe that collusion rings are a common form of manipulation. If so, this algorithm can provide some robustness without impacting the expressiveness of bids \ref{des:expressive} or the incentives to bid \ref{des:incentive}. 

As for weaknesses, this algorithm does not do anything to stop a malicious reviewer who is not colluding with others \ref{des:low_prob}. For example, this may be a reviewer aiming to torpedo-review a rival's paper. Furthermore, this algorithm can be circumvented by groups of reviewers who decide to collude across multiple different conferences or otherwise compensate each other outside the scope of a single conference's peer review process.  
Program chairs cannot effectively adjust the algorithm to their needs, as even increasing the size of the removed cycles is computationally difficult \ref{des:adjust}. 
This computational difficulty poses a challenge for scalability \ref{des:comp}, as finding a maximum-similarity assignment subject to cycle constraints requires solving an integer program.

The impact of this algorithm on the quality of the assignment is unclear \ref{des:quality}. With enough expert reviewers for each topic, it's possible that most honest reviewers involved in a high-similarity cycle can be replaced with a similarly-qualified reviewer; at AAAI 2021, preventing 2-cycles lowered the total assignment similarity by only 0.01\%~\cite{leytonbrown2022matching}. However, the conference in question may not have a deep enough reviewer pool and this claim may not hold even if it does. 
Additionally, the difficulty of attacking this algorithm is dependent on the type of attacker \ref{des:attack_cost}.
For colluding pairs of reviewers, the algorithm is not trivial to circumvent, since either an additional collaborator must be recruited or the submission venue of one of the papers must be changed; however, large colluding groups can easily set up cycles of higher length to avoid detection.

\subsection{Algorithm: \geographicdiversity}
 
\paragraph{Description:} Like the \cycleprevention algorithm, this approach focuses on defending against malicious reviewers who collude in groups. It specifically defends against groups of colluding reviewers that are based in a single geographic region by adding some form of geographic diversity constraint on the reviewer assignment. For example, AAAI 2021 used a constraint that no two reviewers assigned to the same paper belonged to the same region~\citep{leytonbrown2022matching}, and AAAI 2022 used a constraint that at least one assigned reviewer must be from a different region as the paper's authors. This approach is motivated by the idea that colluding groups are more likely to be from a single region, since reviewers from different areas are less likely to know each other or be able to communicate easily. 

\paragraph{Evaluation:} Large conferences include reviewers from a wide range of geographic regions, and experts in any particular topic exist in many regions. Thus, a strength is that this algorithm should not impose significant limitations on the assignments for honest reviewers. The overall assignment quality \ref{des:quality}, expressiveness of bids \ref{des:expressive}, and incentive to bid \ref{des:incentive} should all remain quite high, even if some expert reviewers are blocked from their optimal assignment. For example, the geographic diversity constraint imposed by AAAI 2021 lowered the assignment similarity by only 0.85\%~\cite{leytonbrown2022matching}. Malicious reviewers who would be stopped by this algorithm can attempt to avoid detection by recruiting colluders from a different region or by changing their location and affiliation in the conference system. However, recruiting colluders from other regions may be difficult and falsified locations can be detected by careful program chairs, making effectively circumventing this algorithm difficult \ref{des:attack_cost}.

One weakness is that, like \cycleprevention, this algorithm does not defend against a malicious reviewer who is not colluding with others or against colluding reviewers who compensate each other outside of the conference's peer review process \ref{des:low_prob}. It further does not defend well against colluding groups containing reviewers from several different regions, which could have formed because the reviewers previously met in some professional setting or because reviewers have moved institutions to a different region. The program chairs can choose the specific form of geographic constraint that is desired, but cannot easily see how effectively this will prevent collusion \ref{des:adjust} since the geographical distribution of colluding groups is unknown.  
The computational cost of the algorithm depends on the exact form of geographic constraint posed \ref{des:comp}.

\subsection{Algorithm: \bidmodeling} 

\paragraph{Description:} This algorithm, proposed in \cite{wu2021making}, uses the submitted bids from all reviewers to train a linear regression model. This model aims to predict the bid value for each reviewer-paper pair as a function of various features of that reviewer-paper pair, including the text similarity and the subject area intersection. The paper assignment is then chosen to maximize the total \textit{predicted} bid value of the assigned reviewers. The authors propose further techniques to defend against groups of colluding reviewers. 

\paragraph{Evaluation:} The primary strength of the \bidmodeling algorithm is its robustness against malicious behavior \ref{des:low_prob}: assuming that malicious reviewers cannot manipulate the reviewer-paper features, \confver{\citet[Figure 1-2]{wu2021making} demonstrate}\arxivver{\cite[Figure 1-2]{wu2021making} demonstrates} that a single malicious reviewer is unable to improve their probability of assignment to a target paper using a naive attack and has limited success with a more advanced heuristic attack. Furthermore, computing an effective attack against this model requires knowledge of the features and bids of other reviewers \ref{des:attack_cost}, which is unlikely to be available to malicious reviewers. The authors also find that the text similarity and bid values of the resulting assignment remain comparable to standard assignment methods \ref{des:quality}: \bidmodeling achieves a 16\% increase in the average text-similarity score of the assignment over a standard assignment algorithm with the NeurIPS 2014 similarity function, and a 38\% increase in the average bid value of the assignment over a standard assignment algorithm using only text similarities~\cite[Table 1]{wu2021making}. 

As for weaknesses, the algorithm pays a price for this robustness in terms of its flexibility to reviewer preferences \ref{des:expressive}, as a reviewer with incorrect text similarities may find their predicted bid values to be incorrect. The algorithm also does not allow for easy tuning by the program chairs \ref{des:adjust}, since the hyperparameters are not clearly connected to any desiderata. 
Additionally, if reviewer and paper features such as the subject areas and text similarities can be strategically manipulated, this approach may not be effective. 
Computing appropriate reviewer-paper features and fitting the model will add some additional time to the assignment algorithm at scale \ref{des:comp}, but the algorithm does run in polynomial time.  

Additionally, we conducted experiments which indicate that honest reviewers may not be sufficiently incentivized to provide bids to the algorithm \ref{des:incentive}. We sample 1000 reviewers from the dataset provided in \cite{wu2021making} and for each compute the assignments that would result if they provide their honest bids and if they provide no bids. In Figure~\ref{fig:wu}, we plot the size of the symmetric difference between the set of papers assigned to this reviewer in these two cases under \bidmodeling. We see that a majority of reviewers have identical assignments under \bidmodeling, regardless of whether or not they provide bids; the mean number of papers changed is $1.394$ and the median is $0$. For comparison, we also plot in Figure~\ref{fig:neurips} the same metric under the standard paper assignment algorithm using the NeurIPS 2016 similarity function~\citep{shah2018design}; the mean change is $2.973$ and the median is $2$.

\begin{figure*}[t!]
    \centering
    \begin{subfigure}[t]{0.45\textwidth}\includegraphics[width=1\textwidth]{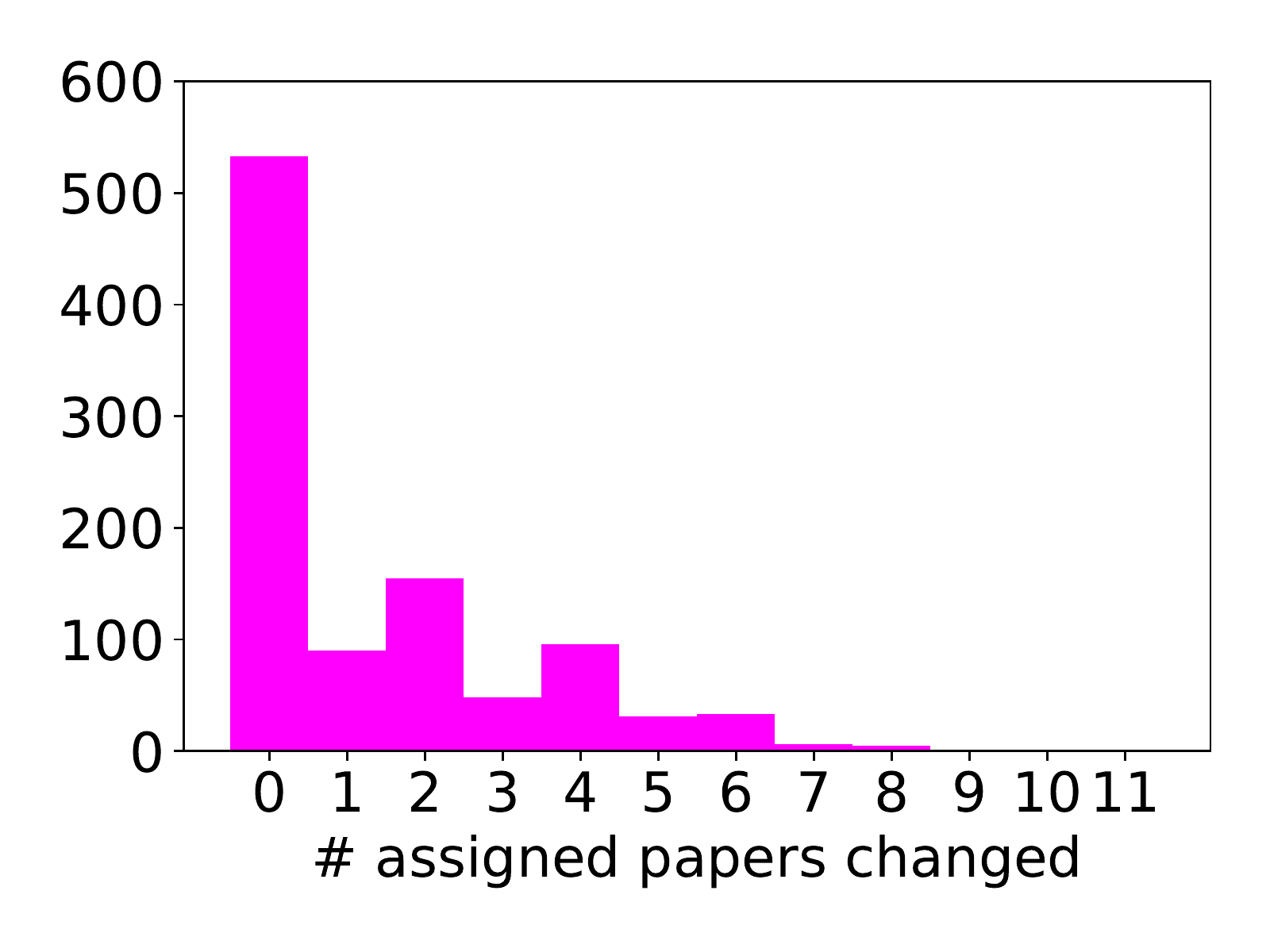}\caption{\bidmodeling.}\label{fig:wu} \end{subfigure}\quad
    \begin{subfigure}[t]{0.45\textwidth}\includegraphics[width=1\textwidth]{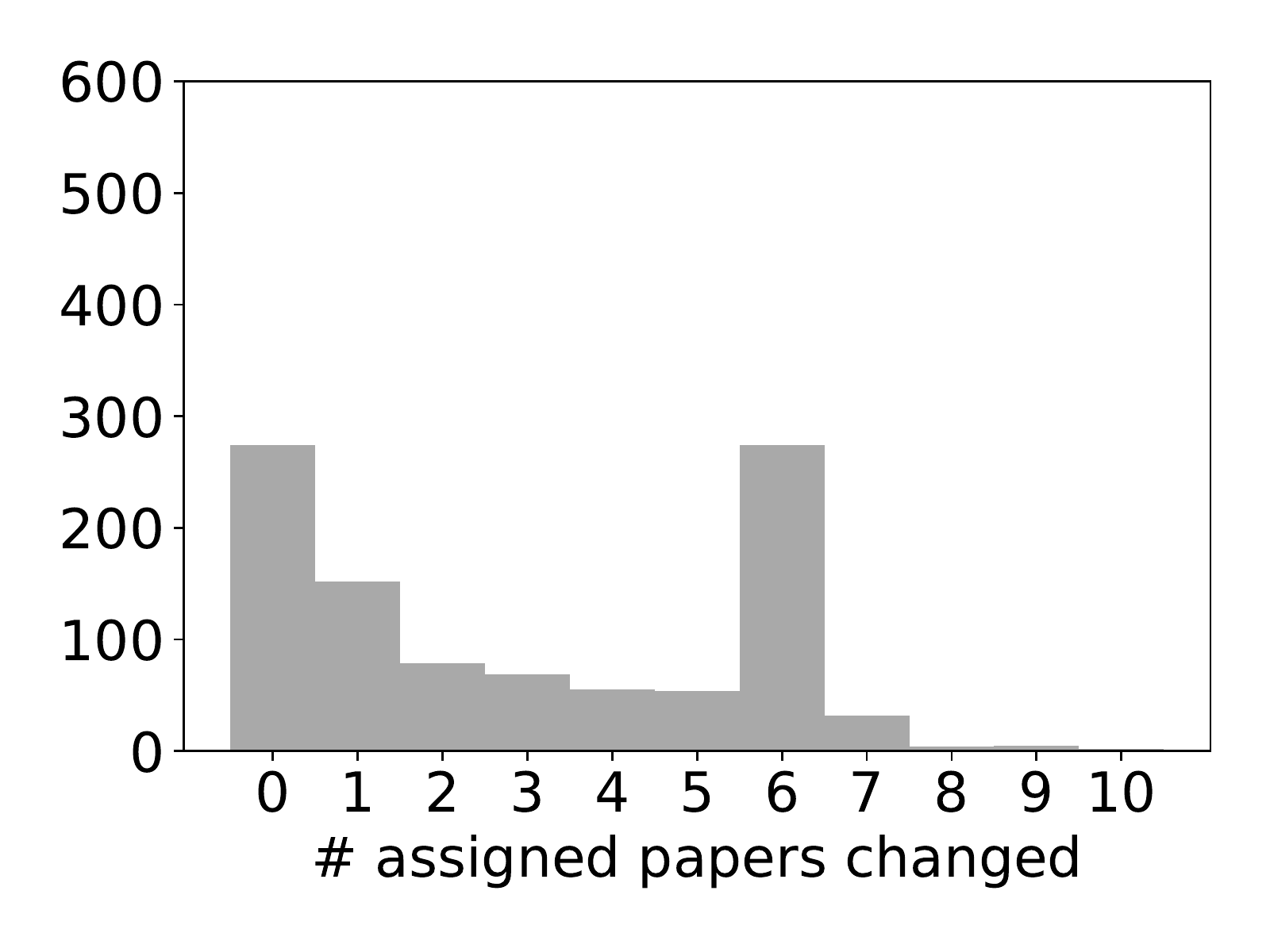} \caption{Standard assignment algorithm, NeurIPS 2016 similarity function~\citep{shah2018design}.}\label{fig:neurips} \end{subfigure} 
    \caption{Symmetric differences between the sets of papers assigned to $1000$ reviewers with honest bids and with no bids. Each reviewer is assigned at most $6$ papers and each paper is assigned to $3$ reviewers, within a dataset of around $2500$ papers and reviewers~\citep{wu2021making}. 
    } \label{fig:diffs}
\end{figure*}

\subsection{Algorithm: \reviewerclustering} 

\paragraph{Description:} Similar to \bidmodeling, this algorithm takes as input various features for each reviewer, such as their subject areas and their text similarity scores with each paper. Based on these features, it clusters reviewers into groups of some fixed size $m$. Papers are then assigned to each group based on the averaged bids of that group and randomly distributed among reviewers within the group. This algorithm is our attempt to capture some of the ideas behind \bidmodeling in a simple manner while also providing a guarantee on the maximum probability of successful manipulation: at most $1/m$. The idea of clustering reviewers by their features and arbitrarily distributing papers within each cluster is already used in contexts where reviewer assignment is done entirely by subject area~\citep{merrifield2009telescope}. 

\paragraph{Evaluation:} On the strong side, the algorithm appears to limit much of the control that a malicious reviewer has over their assignment in the same manner as \bidmodeling \ref{des:low_prob}, and it also provides a parameter that can easily be tuned to adjust the tradeoff between assignment quality and probability of successful manipulation \ref{des:adjust}. 

However, weaknesses of the algorithm are that it would not work well for reviewers with inaccurate text similarities \ref{des:expressive} and that a malicious reviewer does not require knowledge of other reviewers' features in order to determine how to bid \ref{des:attack_cost}. Further, some honest reviewers may choose to not submit bids in the hopes that the bids of their cluster are suitable enough \ref{des:incentive}. 
It could also be computationally expensive to find good fixed-size clusters, since heuristic approaches may perform poorly \ref{des:comp}.
The quality of the resulting assignment depends strongly on how well the reviewer pool can be clustered into groups of similar expertise and interests, which may vary by conference \ref{des:quality}.

\subsection{Algorithm: \probabilitylimit} \label{sec:problimit}

\paragraph{Description:} This algorithm, proposed in \cite{jecmen2020manipulation}, adds a randomized aspect to the standard assignment algorithm. Like the standard assignment algorithm, it takes bids and computes similarities as normal. Then, given a parameter $q \in [0, 1]$, it finds a randomized assignment with maximum expected similarity, subject to the constraint that the maximum probability of any reviewer-paper assignment is at most $q$. 

\paragraph{Evaluation:} We first consider strengths. By definition, \probabilitylimit finds the assignment with highest similarity among all assignments that provide a guarantee on the maximum probability of successful manipulation \ref{des:quality}. On data from ICLR 2018, \probabilitylimit achieves 90.8\% of the standard assignment algorithm's similarity with $q=0.5$~\cite[Figure 1]{jecmen2020manipulation}. Program chairs can compute this percentage for various values of $q$ before choosing one to use in deployment, allowing them to easily control the tradeoff between the assignment similarity and the maximum probability of successful manipulation \ref{des:adjust}. 
Additionally, the algorithm's guarantees on the maximum probability of successful manipulation hold without any assumptions on the malicious reviewers' capabilities, so it is still effective even if aspects like the subject areas and text similarities can be manipulated.  
Since reviewers' bids are used without modification, the expressiveness of bids is fully preserved \ref{des:expressive} and honest reviewers are still incentivized to bid \ref{des:incentive}. 
The randomized assignment can be found with the same computational resources as the standard assignment algorithm, and sampling the assignment adds little additional overhead \ref{des:comp}.

However, one weakness of the algorithm is that it's easy for a malicious reviewer to determine their best strategy \ref{des:attack_cost}: bid the maximum value on their target paper and the minimum value on all others. In this manner, malicious reviewers may easily be able to achieve this theoretical maximum probability in practice, as demonstrated in simulations by \cite{jecmen2020manipulation}. Additionally, although \probabilitylimit is optimal in terms of similarity (subject to the constraint on the probability of successful manipulation), it remains agnostic to the computation of similarities. If some similarity components (e.g., text similarity) are believed to be more trustworthy than the bids, this algorithm may not be able to control the probability of successful manipulation as efficiently as other algorithms that leverage this distinction \ref{des:low_prob}. Although one can place greater weight on trustworthy components when computing similarities, this approach may not be the optimal way to accommodate such assumptions.

In Section~\ref{sec:display}, we mention that \probabilitylimit dominates the hard-constraint variant of \randomdisplay in terms of expected similarity. However, one downside of \probabilitylimit is that reviewers may waste time bidding on papers that they will not be assigned due to the subsequent randomization. In contrast, by doing the randomization before bidding, \randomdisplay ensures that reviewers only spend time bidding on papers for which they are eligible to be assigned.

\section{Discussion}
Addressing bid manipulation in a manner that maintains the valuable properties of paper bidding is a pressing issue, given the scale and importance of modern conferences. The approaches we consider tackle the issue in a variety of ways, with different strengths and weaknesses. The least intrusive approaches (\bidlimit, \randomdisplay, \cycleprevention, and \geographicdiversity) keep the paper assignment process largely the same as under the standard assignment algorithm, which make them easier to deploy in practice. These algorithms preserve the essential benefits of bids but may not do enough to prevent manipulation effectively, as they have not been rigorously examined.  

The other algorithms can be divided into two categories based on how they use the non-bid similarity features (e.g., text similarities). \bidmodeling, along with the related \reviewerclustering algorithm, gains significant power to stop manipulation under the assumption that these features are harder for an adversary to change. If the adversary can manipulate these features (e.g., via falsifying their TPMS profile or strategically providing subject areas~\cite[Section 4.2]{shah2021survey}), these algorithms may lose some effectiveness. In contrast, \probabilitylimit entirely abstracts away the similarity computation, ignoring any differences in the cost of manipulating different features. This algorithm thus may be most appropriate for a worst-case setting where program chairs are not willing to make assumptions about the capabilities of malicious reviewers.

\cycleprevention and \geographicdiversity specifically focus on defending against colluding reviewers, but other approaches also can be extended to handle collusion. The formulation of the \bidmodeling algorithm as proposed by \confver{\citet{wu2021making}}\arxivver{\cite{wu2021making}} includes a component that effectively prevents colluding groups of a known size from manipulating the learned model. \confver{\citet[Section 5.2]{jecmen2020manipulation} provides}\arxivver{In \cite[Section 5.2]{jecmen2020manipulation}, the authors provide} an extension to their \probabilitylimit algorithm that additionally enforces that each paper be assigned diverse reviewers, essentially combining the  \probabilitylimit and \geographicdiversity approaches. 

The algorithms we consider in this work sit at different positions on the tradeoffs between our proposed desiderata, but many other positions on these tradeoffs remain unfilled. We hope that our list of desiderata can help direct the development of additional algorithms to address bid manipulation. For example, we proposed the \reviewerclustering algorithm as a simplified variant of the \bidmodeling algorithm that improves on desideratum \ref{des:adjust}. Further study on the bid manipulation problem can improve on the balance between these various desired properties. 

In addition, some past conferences have used multiple of these approaches at the same time. AAAI 2021 used both \cycleprevention and \geographicdiversity, and AAAI 2022 used forms of \bidlimit, \geographicdiversity, and \probabilitylimit. A useful direction of future work is to develop new algorithms that combine multiple previous approaches in order to simultaneously achieve their benefits. 

Finally, our analysis indirectly compares algorithms based on whether they satisfy our desiderata. One might hope to additionally conduct some form of direct comparison between algorithms, e.g., by comparing the assignment quality of each algorithm at a given probability of successful manipulation. However, there are numerous challenges in making such a comparison. Different algorithms make different assumptions about adversary capabilities and may optimize different objectives, such that both ``probability of successful manipulation'' and ``assignment quality'' may be incomparable between algorithms. Furthermore, non-malicious reviewers may behave differently under different algorithms (e.g., by providing more bids under \bidlimit than under another algorithm). Determining from past data how these reviewers might have behaved in a different environment is difficult, as seen in the literature on valuation estimation in auctions~\cite{jiang2007bidding}. We leave addressing these challenges for future work. 

\subsubsection*{Acknowledgements}
This work was supported by NSF CAREER award 1942124, NSF CAREER award 2046640, and NSF 1763734. 
We thank Hanrui Zhang for helpful discussions.

\ifisarxiv

\bibliographystyle{unsrt}
{\small
\bibliography{bibtex}}

\else 

\bibliography{bibtex}
\bibliographystyle{iclr2022_conference}

\fi

\appendix
\section*{Appendix}
\section{Comparison of \randomdisplay and \probabilitylimit} \label{apdx:random-vs-q}
We consider the hard-constraint variant of \randomdisplay, described in Section~\ref{sec:display},  which does not allow a reviewer to be assigned to papers that were not displayed to them during bidding. Define the ``display fraction'' of \randomdisplay as the proportion of papers in the subset displayed to each reviewer. In this section, we compare the hard-constraint variant of \randomdisplay with display fraction $q$ to \probabilitylimit (from Section~\ref{sec:problimit}) with probability limit $q$, in terms of expected similarity. These algorithms are directly comparable, since both limit the maximum probability of successful manipulation at $q$. 

We first introduce some notation. Call $n$ the number of reviewers and $m$ the number of papers. Define $S \in [0, 1]^{m \times n}$ as the matrix of similarities used by both algorithms, where $S_{p, r}$ is the similarity of paper $p$ with reviewer $r$. $S$ can be computed from the bids along with other features using any method, since both algorithms are agnostic to the method of similarity computation. We assume that the bids of each reviewer are the same regardless of which algorithm is used or which papers are displayed to that reviewer. 

The following result shows that \probabilitylimit outperforms \randomdisplay in terms of expected similarity. 
\begin{theorem}
For any $q \in [0, 1]$, the expected similarity of the assignment produced by \probabilitylimit with probability limit $q$ is no less than the expected similarity of the assignment produced by the hard-constraint variant of \randomdisplay with display fraction $q$.
\end{theorem}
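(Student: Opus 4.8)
The plan is to observe that the (randomized) assignment produced by the hard-constraint variant of \randomdisplay with display fraction $q$ is itself a feasible solution to the optimization problem that \probabilitylimit solves with probability limit $q$, and then to conclude by optimality of \probabilitylimit. Both algorithms output a distribution over deterministic feasible assignments, and I will describe each such distribution by its matrix of marginal assignment probabilities $x \in [0,1]^{m \times n}$, where $x_{p,r}$ is the probability that reviewer $r$ is assigned paper $p$. Recall from \cite{jecmen2020manipulation} that, writing $\mathcal{P}$ for the standard assignment polytope (the usual feasibility constraints on per-reviewer and per-paper loads, conflicts, etc.), the set of marginal matrices achievable by a randomized assignment with maximum marginal at most $q$ is exactly $\mathcal{P} \cap \{x : x_{p,r} \le q \text{ for all } p,r\}$ — the ``$\supseteq$'' direction uses integrality of $\mathcal{P}$ to decompose any such $x$ into a distribution over integral assignments — and that expected similarity is the linear functional $\langle S, x \rangle = \sum_{p,r} S_{p,r}\, x_{p,r}$. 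Hence \probabilitylimit attains the value $\langle S, x^\star \rangle$ where $x^\star \in \arg\max\{\langle S, x\rangle : x \in \mathcal{P},\ x_{p,r} \le q\}$.

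Next I would analyze \randomdisplay. Let $D = (D_1, \dots, D_n)$ be the random profile of displayed paper sets; by the meaning of display fraction $q$, each paper $p$ lies in $D_r$ with probability $q$ (when $qm$ is not an integer, a subset of size $\lfloor qm \rfloor$ gives probability at most $q$, which only helps). Given $D$, the hard-constraint variant returns some feasible integral assignment $A(D)$ that never assigns reviewer $r$ a paper outside $D_r$. Let $\bar{x}_{p,r} = \Pr_D[\,A(D) \text{ assigns } r \text{ to } p\,]$. Then $\bar{x}_{p,r} \le \Pr_D[p \in D_r] = q$ for all $p,r$, and since $\bar{x}$ is the convex combination $\sum_D \Pr[D]\, A(D)$ of feasible integral assignments, $\bar{x} \in \mathcal{P}$. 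Therefore $\bar{x}$ is feasible for \probabilitylimit's problem, and the expected similarity of \randomdisplay is $\mathbb{E}_D[\langle S, A(D)\rangle] = \langle S, \bar{x} \rangle \le \langle S, x^\star \rangle$, which is the expected similarity of \probabilitylimit. This completes the argument.

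The main thing to be careful about — rather than a genuine obstacle — is lining up the two algorithms so that the comparison is apples-to-apples: both must use the same assignment feasibility constraints defining $\mathcal{P}$, and both must use the same similarity matrix $S$. The latter is exactly the standing assumption stated before the theorem (the bids, and hence $S$, do not depend on which algorithm is run or which papers are displayed), so $S_{p,r}$ is well-defined and identical across the two settings. A secondary point is to invoke the structural fact from \cite{jecmen2020manipulation} that \probabilitylimit's reachable marginals are precisely $\mathcal{P} \cap \{x \le q\}$ (so that $\bar x$ being in this set really does certify feasibility), and to note that \randomdisplay's restricted assignment step is assumed to always return a feasible assignment, which is the same well-definedness assumption needed for the standard assignment algorithm.
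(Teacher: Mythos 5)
Your proposal is correct and follows essentially the same route as the paper's proof: take the marginal assignment-probability matrix induced by the hard-constraint variant of \randomdisplay, observe its entries are bounded by $q$ because assignment implies display, and conclude by feasibility for (and optimality of) \probabilitylimit. Your write-up is slightly more explicit about membership in the assignment polytope and the decomposition of capped marginals into integral assignments, but the core argument is identical.
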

\begin{proof}
Define the matrix $Q \in \{0, 1\}^{m \times n}$ as the random variable representing the papers displayed to each reviewer by \randomdisplay; $Q_{p,r} = 1$ if paper $p$ is displayed to reviewer $r$. Since $qm$ of the $m$ papers are chosen uniformly at random for each reviewer, $\mathbb{E}[Q_{p, r}] = q$.  Call $Q^{(1)}, \dots, Q^{(N)}$ the possible realizations of $Q$, from which $Q$ is chosen uniformly at random. For each $i \in [N]$, define $A^{(i)} \in \{0, 1\}^{m \times n}$ as the matrix representing the assignment produced by \randomdisplay if $Q^{(i)}$ was displayed; $A^{(i)}_{p,r} = 1$ if paper $p$ is assigned to reviewer $r$.

The expected similarity of the assignment produced by \randomdisplay is
\begin{align*}
    \frac{1}{N} \sum_{i=1}^N \sum_{r = 1}^n \sum_{p = 1}^m A^{(i)}_{p, r} S_{p, r}.
\end{align*}
The matrix $F = \frac{1}{N} \sum_{i=1}^N  A^{(i)}$ satisfies $F_{p, r} \leq q$ for all entries $(p, r)$, since 
\begin{align*}
    \frac{1}{N} \sum_{i=1}^N A^{(i)}_{p, r} \leq \frac{1}{N} \sum_{i=1}^N Q^{(i)}_{p, r} 
    = \mathbb{E}[Q_{p, r}] 
    = q.
\end{align*}
Consider the randomized assignment represented by $F$, where $F_{p, r}$ represents the marginal probability of assigning paper $p$ to reviewer $r$. This randomized assignment has the same expected similarity as the assignment from \randomdisplay. Further, this is a feasible randomized assignment for \probabilitylimit with probability limit $q$, meaning that \probabilitylimit will return an assignment with at least this expected similarity. 
\end{proof}

\end{document}